\definecolor {processblue}{cmyk}{0.96,0,0,0}
\newcolumntype{d}[1]{D..{#1}}
\newtheorem{theorem}{Theorem}[section]
\newtcolorbox{boxA}{
    fontupper = \bf,
    boxrule = 1.5pt,
    colframe = black 
}
\newcommand{\BibTeX}{B\kern-.05em{\sc i\kern-.025em b}\kern-.08em\TeX}
\begin{document}


\begin{frontmatter}


\paperid{1656} 


\title{PostDoc: Generating Poster from a Long Multimodal Document Using Deep Submodular Optimization}


\author[A]{\fnms{Vijay}~\snm{Jaisankar}}
\author[B]{\fnms{Sambaran}~\snm{Bandyopadhyay}\thanks{Corresponding Author. Email: samb.bandyo@gmail.com}}
\author[C]{\fnms{Kalp}~\snm{Vyas}} 
\author[C]{\fnms{Varre}~\snm{Chaitanya}\thanks{Vijay Jaisankar, Kalp Vyas and Varre Chaitanya were interns at Adobe Research when the work was conducted.}}
\author[B]{\fnms{Shwetha}~\snm{Somasundaram}}

\address[A]{International Institute of Information Technology, Bangalore}
\address[B]{Adobe Research}
\address[C]{IIT Bombay}


\begin{abstract}
A poster from a long input document can be considered as a one-page easy-to-read multimodal (text and images) summary presented on a nice template with good design elements. Automatic transformation of a long document into a poster is a very less studied but challenging task. It involves content summarization of the input document followed by template generation and harmonization. In this work, we propose a novel deep submodular function which can be trained on ground truth summaries to extract multimodal content from the document and explicitly ensures good coverage, diversity and alignment of text and images. Then, we use an LLM based paraphraser and propose to generate a template with various design aspects conditioned on the input content. We show the merits of our approach through extensive automated and human evaluations.
\end{abstract}

\end{frontmatter}



\section{Introduction}\label{sec:intro}
Recent success of large language models \citep{openai2023gpt4,petroni2020how} and large vision language models \citep{radford2021learning,li2022blip} have led to several new applications in the field of generative AI. 
In this work, we focus on transforming a long multimodal document, containing both text and images, into a poster, which is a visually rich one-page multimodal summary of the document. A poster should have a good coverage of the overall content of the document and, is generally easy-to-read and presented in a nice template with good design elements \citep{shelledy2004make}. 
To create a poster, there are different types of design tools and products available such as Microsoft PowerPoint~\footnote{\url{https://create.microsoft.com/en-us/templates/posters}} and Adobe Express~\footnote{\url{https://www.adobe.com/express/}}.
However, these products need significant manual effort to select the multimodal content from a document and choose the appropriate design elements in the poster. Such manual efforts are time consuming and often need specific domain expertise based on the document.

Automatic transformation of a document to a poster is a less studied problem in the research literature and industry \citep{qiang2019learning,xu2021neural,xu2022posterbot}. Such a transformation process mainly involves two major steps: (i) Content summarization (or planning), which aims to select key content from the document and paraphrase them in some appropriate format to be put in the poster and  (ii) Template generation and harmonization, which involves generating a suitable layout and design elements such as background of the poster, font and size of letters, number of text and image elements etc. of the poster based on the output of the content planning step and fill up the generated template with the generated content.
Content summarization for poster is challenging because of the following reasons. A poster is very limited in size (single large page), but the input document can have multiple pages. Thus, it is essential that the content in the poster (i) represents all the important aspects of the input document (coverage), (ii) has very less repetition within it (diversity) and (iii) has well aligned images and text. Coverage and diversity have been studied in text summarization literature \citep{lin-bilmes-2011-class}. But their interpretation in the multimodal setup (with text and images) is not well-understood. 

Recent advent of zero-shot and few-shot LLMs, especially GPT-3.5-turbo (ChatGPT) and GPT-4 \citep{openai2023gpt4} have significantly improved the state-of-the-art (SOTA) for several natural language processing tasks like
summarization. However, using them directly for the content summarization of poster generation is not feasible because: (i) Current LLM models (including GPT-4) cannot produce multimodal output and their ability to process multimodal input is still questionable; (ii) Since we focus on long documents as input, LLM based algorithms cannot process that lengthy text at single shot. There are techniques which divide long text into chunks and feed each chunk separately to an LLM for summarization \citep{bhaskar2023prompted}. Researchers have also tried to improve the context length of LLMs using length extrapolation \citep{peng2023yarn} and position interpolation of transformers \citep{chen2023extending}. However, such approaches lose the global view of the document, computationally expensive or cannot be used with black-box LLMs. Moreover, LLMs tend to hallucinate and their performance drops when the input context is very long \cite{liu2023lost}.

In this paper, we have addressed all the challenges mentioned above by proposing an approach to handle multimodal content jointly and avoid feeding the entire content to an LLM directly. 
%
Following are the key contributions made in this work:
\textbf{(1)} We propose an efficient and computationally fast end-to-end pipeline, referred as \textit{PostDoc} (in Figure \ref{fig:PostDoc}), for automatically generating a visually rich \textit{Post}er from a long multimodal input \textit{Doc}ument.
\textbf{(2)} For selecting suitable content from the input document, we propose a novel optimization formulation using deep submodular functions which explicitly ensures coverage, diversity and multimodal content alignment in the multimodal summary. This summary is passed to an LLM (GPT-3.5-turbo, a.k.a., ChatGPT) for paraphrasing to put into a poster.
\textbf{(3)} Based on the input document and the selected content, we generate a suitable poster template with different design elements to ensure that the poster looks good aesthetically. 
\textbf{(4)} We conduct thorough experimentation to validate the quality of the content and design aspects of the generated posters through automated and human evaluations.

\section{Related Work and Background}\label{sec:related}

\subsection{Multimodal Summarization} 
Text summarization has been studied quite extensively in the literature \cite{zhang2020pegasus,zhong2020extractive}. From the last few years, researchers have focused into multimodal summarization which involves different modalities such as text, images and videos, and leverages cross-modal information \cite{mademlis2016multimodal,li2017multi}. In this section, we focus on works which consider both input and output to be multimodal, as required for transforming a document to a poster. 
\citet{zhu-etal-2018-msmo} generated abstractive multimodal summary from a document, but it is trained by the target of text modality, leading to the modality-bias problem.
\citet{Zhu_Zhou_Zhang_Li_Zong_Li_2020} extended it by including images along with the text but is currently limited to generating summaries with only one image. \citet{zhang2022unims} further improved on these methods by using BART and knowledge distillation which does a better job of image selection but still treats image and text as different entities. \citet{He_2023_CVPR} presented A2Summ, a novel unified
transformer-based framework for multimodal summarization primarily focusing on  text and video modalities. \citet{zhang2021hierarchical} proposed a technique to summarize the document based on using the hierarchy in the form of a graph network but also faces the issue of choosing only one image for the summary. 


Submodular functions have been used for multiple summarization tasks \cite{lin-bilmes-2011-class,tschiatschek2014learning,modani2016summarizing,zhu-etal-2018-msmo} as they are a natural fit for text summarization.
A set of differentiable submodular functions, also known as deep submodular functions \cite{bilmes2017deep,kothawade2020deep} have been proposed in the literature and also used for text summarization. However, they have not incorporated the multimodal aspects such as multimodal coverage, diversity, and image-text alignment terms which are key components for multimodal summarization.
In this work, we address this research gap and aim to design a deep submodular function which captures a set of intrinsic properties of multimodal extractive summarization and also trainable from the ground truth data.
 
\subsection{Layout Generation}
Regarding the design aspects, \citet{qiang2019learning} address poster layout generation using recursive divisions, yet within the context of scientific papers. \citet{Gupta_2021_ICCV} propose a layout generation method. However, it lacks conditional generation. On the other hand, \citet{chai2023layoutdm} generate layouts conditionally based on specific requirements, such as the count and placement of text and image bounding boxes. 
\citet{hsu2023posterlayout} generate layout depending on the structure of the background image being used but this method too lacks conditional generation. These methods are computationally expensive, but not effective in generating posters with non-overlapping bounding boxes. 
Our work addresses these limitations by adopting a simple heuristic-based layout generation approach specifically for posters. Additionally, we enhance poster aesthetics by considering design elements such as colors, fonts, and backgrounds, based on the content. 

\subsection{Document Transformation}
There are few existing works related to poster creation from documents. \citet{xu2021neural} generates posters from documents but relies on template retrieval from a fixed set of templates and also limiting its applicability to research papers exclusively. There are few research works on generating slides from scientific documents automatically \citep{sun2021d2s,fu2022doc2ppt}. But they often need users to come up with an outline specific to slide presentation or summarize individual sections in each slide.

\subsection{Background on Submodular Functions}
Here, we discuss some key concepts required to understand our solution. A submodular function $f$ is a set function with diminishing returns property. In simple terms, adding an element to a smaller set provides a larger marginal gain compared to adding the same element to a larger set. Mathematically, for sets $A \subseteq B$, on adding extra element $x \notin B$, $f(A \cup {x}) - f(A) \geq f(B \cup {x}) - f(B)$ \citep{lin-bilmes-2011-class}. A key advantage of using a submodular function is that there exists a simple greedy algorithm of iteratively choosing the element that maximises the marginal gain with an approximation guarantee.

Recently, researchers explore deep (trainable) submodular functions \citep{bilmes2017deep,kothawade2020deep} where the data is projected into a suitable feature or embedding space. They can be represented as: 
$f(A)=\sum_{u \in \mathbb{U}} \Phi\Big(w(u)m_u(A)\Big)$, where $\Phi$ is a non decreasing non-negative concave function, $w(u)$ represents the trainable weight of the feature $u$, $m_u(S) = \sum_{s \in S} m_u(s)$ is a non-negative modular function. These functions enable the learning of submodular functions through a training dataset and also their inference is fast compared to the quadratic complexity of most of the other submodular functions.





\section{Problem Statement and Solution Approach}\label{sec:prob}
As discussed in Section \ref{sec:intro}, we aim to automatically generate a poster from a long document.
The document may contain different types of multimodal content such as text, images, tables, charts, etc. For the ease of presentation, we use image to represent all such non-textual elements in a document. As shown in Figure \ref{fig:PostDoc}, we use Adobe Extract API~\footnote{\url{https://developer.adobe.com/document-services/apis/pdf-extract/}} to extract the multimodal content from the document.
Then, we use the pre-trained multimodal model BLIP \citep{li2022blip} to encode both text and image elements into a common vector space of dimension 768. 
We have observed that the embeddings of text and images are often not in the same scale. 
To overcome this issue, we first shift all the embeddings to positive coordinate of the embedding space and do a L1 normalization of the embeddings. 

With this, we present the problem mathematically as follows.
Given $D = (e_1, e_2, \cdots, e_N)$ is a multimodal input document with $N$ (can vary over the documents) content elements in order where each content element can be a text sentence or an image. We assume that each $e_i \in \mathbb{R}^d_+$ denotes a $d$ dimensional normalized BLIP embedding (as discussed above) of the $i$th content element in the document ($d=768$ in this case). Our goal is to select a subset of content elements $A \subseteq D$ with $|A| \leq K$ so that content elements of $A$ have good coverage, diversity and alignment as discussed in Section \ref{sec:intro}. Since the size of the poster is limited, we extract the corresponding maximum $K>0$ content elements from the document. We discuss how to fix $K$ in Section \ref{sec:exp}.
For training our content selection algorithm, we use a training set of documents with their ground truth summary. So, we assume to have the following training set $\mathcal{T} = \{(D_1,A^*_1), (D_1,A^*_1), \cdots, (D_M,A^*_M)\}$ containing $M$ pairs of documents and the corresponding ground truth summary.
During the inference for a given document $D$, we will first select the subset $A$ using the trained content selection model. Then we will paraphrase $A$ into a poster friendly format. We also generate a poster template with suitable design elements based on the content and put the paraphrased content into it.

\begin{figure*}
    \centering
    \includegraphics[width=0.8\textwidth]{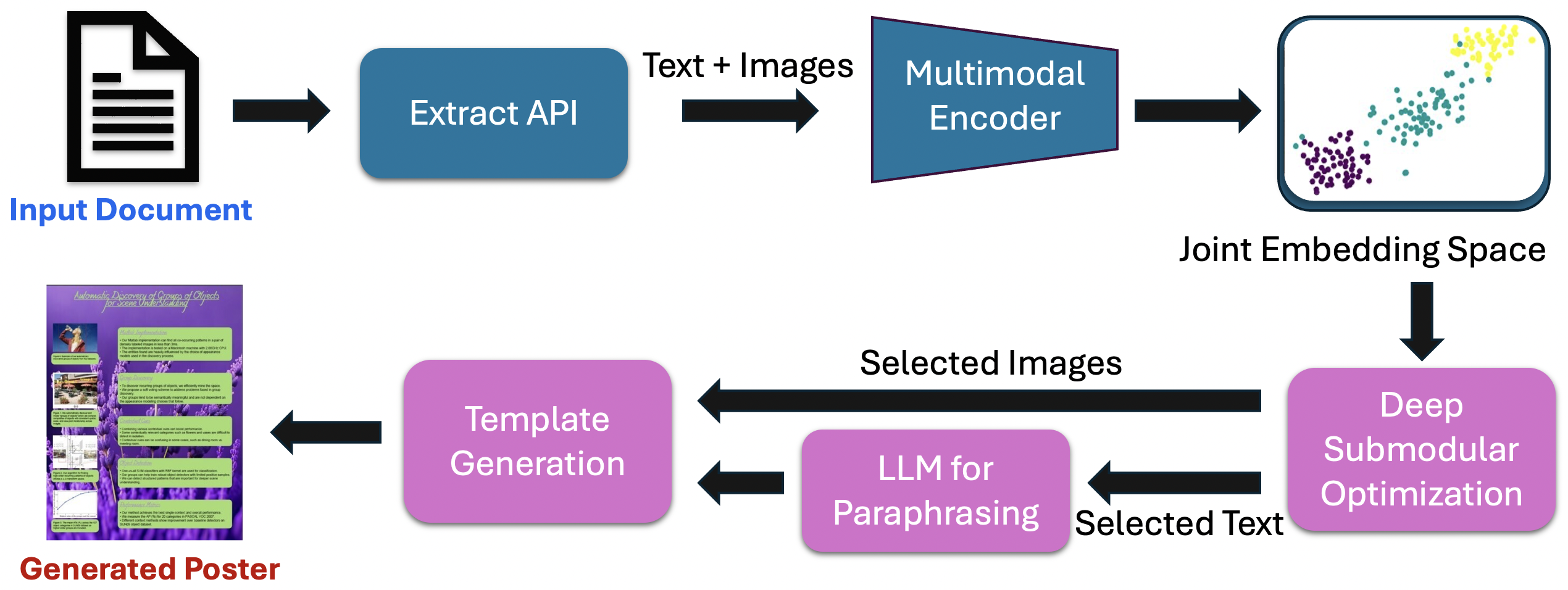} 
    \caption{Block Diagram of PostDoc}
    \label{fig:PostDoc}
\end{figure*}


\subsection{Multimodal Extractive Summarization}\label{sec:conPlan}
Once the embeddings of all the content elements are obtained from a document, the next step is to select only a subset of them such that some desired properties are satisfied. We propose a novel deep submodular based optimization framework for this task.
As mentioned in Section \ref{sec:prob}, the normalized BLIP embeddings of the sequence of content elements (text sentences or images) from the document are represented as $D = (e_1, e_2, \cdots, e_N)$, with $e_i \in \mathbb{R}^d_+$. In this subsection, our goal is to extract a subset of content elements $A \subseteq D$ with $|A| \leq K$ such that the following properties are preserved in the extracted subset: (1) \textbf{Coverage}: We want the content elements of $A$ to represent the whole document $D$ well. Thus, it is expected that any content element in $A$ to be more similar to many elements in $D$. (2) \textbf{Diversity}: Since the poster is of very limited size, we want to avoid unnecessary redundancy in content present in the poster. This property ensures that any content element in $A$ is very different from most of the other elements in $A$. (3) \textbf{Multimodal Alignment}: The output poster contains both images and text. Images and text contain complementary information. Hence, it is important to ensure that the images and text present in the poster are aligned with each other. For e.g., the text present in the poster can brief about the image. Thus, any image element in $A$ should be similar to some text elements in $A$ and vice-versa. (4) \textbf{Ground Truth Data Adaptability}: All the above properties are different intrinsic properties expected in a summarization. However, ground truth summarization data may have other hidden properties which may not be captured above. So, we also want our algorithm to learn from the set of multimodal ground truth summary data. 

Thus, $A$ can be considered as an extractive multimodal summary of the multimodal document $D$ where our goal is to design a summarization model which is trainable and equipped with the inductive bias as mentioned above. Next, we try to construct an objective function to achieve this.
\begin{dmath}\label{eq:dsf}
    f(A) = \sum_{u \in [d]} w_u \;\;
    \sqrt{
        \begin{aligned}
            \sum_{x \in A}\sum_{y \in D} x_u y_u 
            - \sum_{x \in A}\sum_{y \in A} x_u y_u \\ 
            + \sum_{x \in A_I}\sum_{y \in A_T} x_u y_u 
            + |D| \sum_{x \in A} x_u
        \end{aligned}
    }
\end{dmath}
Here, $[d] = \{1,2,\cdots,d\}$ denotes the set of dimensions of $\mathbb{R}^d$. We use a vector of trainable weight parameters $w = [w_1,w_2,\cdots,w_d]^T \geq 0$ which intuitively captures the importance of each dimension of the embedding space. Ideally for a given document $D$, we would like to choose the subset $A \subseteq D$ which maximizes $f(A)$. The term $\sum_{x \in A}\sum_{y \in D}x_u y_u$ captures the similarity of an element $x \in A$ to an element ${y \in D}$ for the $u$th dimension. Since both $x$ and $y$ are L1-normalized, this term over the outer summation contributes more when $x$ and $y$ are similar to each other. Thus, the first term captures the notion of coverage. Similarly, the second term $\sum_{x \in A}\sum_{y \in A} x_u y_u$ captures the similarity of content elements within the extracted multimodal summary $A$. Thus, the negation of this term can be considered as the diversity of the elements within $A$. So far, we have not differentiate between the text and images within $A$. However, selected images in $A$ need to be aligned with the text present in the poster. The third term $\sum_{x \in A_I}\sum_{y \in A_T} x_u y_u$ in Equation \ref{eq:dsf} ensures multimodal alignment, where $A_I$ is the set of images in $A$ and $A_T$ is the set of text sentences in $A$. The last term $|D| \sum_{x \in A} x_u$ is introduced to ensure some nice mathematical property of our loss function. Since, $x \in D$ are L1-normalized, the last term is a constant if $w_u=\frac{1}{d}$, $\forall u \in [d]$ (initial condition as discussed in Section \ref{sec:opti}).

Next, we want to design a loss function to train the parameters $w$ of our model. As mentioned in Section \ref{sec:prob}, we are given with a training set of multimodal documents with the corresponding ground truth summaries as $\mathcal{T} = \{(D_1,A^*_1), (D_1,A^*_1), \cdots, (D_M,A^*_M)\}$. For any $(D_i,A^*_i) \in \mathcal{T}$, we want the value of the function $f$ on our model generated summary to be close to $f(A^*_i)$. We consider the following hinge loss function here:
\begin{dmath}\label{eq:loss}
    \min_{w \geq 0} \;\; \sum_{i=1}^M \bigg( \max\Big( \max_{\substack{A \subseteq D_i \\ |A| \leq K}} \{f(A)\} - f(A_i^*), 0 \Big) + \frac{\lambda}{2} || w ||^2_2 \bigg)
\end{dmath}
In the above equation, we also use an L2 regularizer on $w$ with a weight hyperparameter $\lambda \geq 0$. Based on the performance on validation set, we keep $\lambda=0.1$ for all our experiments. The predicted summary is obtained by maximizing $f(A)$ w.r.t. $A$ such that $A \subseteq D$ with $|A| \leq K$. By minimizing the hinge loss w.r.t. the trainable non-negative weight parameters $w$ ensures that the maximum of $\{f(A)\}$ (i.e., on model predicted summary) is not too far less from the ground truth summary on the training data. We discuss the solution strategy and training of this optimization problem next.

\subsection{Training and Optimization}\label{sec:opti}
The optimization function in Equation \ref{eq:loss} is a constrained min-max optimization on two different types of variables. Here, $w$ is continuous and non-negative. But $A$ is a subset of with a fixed cardinality. To solve this, we use an iterative alternating optimization strategy as discussed below. 

\subsubsection{Maximization w.r.t. $A$}
Let us first focus on maximizing the objective w.r.t. $A$ while keeping $w$ fixed. Then it is essentially a subset selection problem for each $D_i$, $\forall i=1,2,\cdots,M$. Subset selection problems are typically combinatorial in nature and often computationally infeasible. But the following theorem shows an important property of $f$ which will help us to solve the optimization problem.

\begin{theorem}\label{thm:submod}
    \textbf{The set function $f$ in Equation \ref{eq:dsf} is a monotone submodular function.}
\end{theorem}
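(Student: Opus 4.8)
The plan is to show that $f$ fits the template of a deep submodular function stated in the Background section, namely $f(A) = \sum_{u} \Phi(w(u) m_u(A))$ where $\Phi$ is concave non-decreasing non-negative and $m_u$ is a non-negative modular function; then monotonicity and submodularity follow from the known closure properties (non-negative linear combinations preserve both, and composing a concave non-decreasing $\Phi$ with a non-negative modular function yields a monotone submodular function). Concretely, $\Phi = \sqrt{\cdot}$ is concave, non-decreasing and non-negative on $\mathbb{R}_{\geq 0}$, and the $w_u \geq 0$, so it suffices to prove that for each fixed dimension $u \in [d]$ the inner quantity
\begin{equation*}
g_u(A) = \sum_{x \in A}\sum_{y \in D} x_u y_u - \sum_{x \in A}\sum_{y \in A} x_u y_u + \sum_{x \in A_I}\sum_{y \in A_T} x_u y_u + |D| \sum_{x \in A} x_u
\end{equation*}
is (i) non-negative for all $A \subseteq D$ and (ii) a modular (additive) set function in $A$, i.e. $g_u(A) = \sum_{x \in A} \phi_u(x)$ for some $\phi_u(x) \geq 0$.

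For the modularity claim I would rewrite $g_u$ term by term as a sum over single elements of $A$. The coverage term is already modular: $\sum_{x \in A}\big(y_u\text{-sum over }D\big) x_u = \sum_{x\in A} x_u S_u$ with $S_u = \sum_{y\in D} y_u$. The last term is visibly modular. The diversity term $-\sum_{x,y\in A} x_u y_u = -\big(\sum_{x\in A}x_u\big)^2$ is \emph{not} modular on its own, and likewise the alignment term $\sum_{x\in A_I}\sum_{y\in A_T}x_u y_u$ depends on pairs; so the key algebraic step is to check that these two terms combine with the rest into something modular --- or, more likely, that the intended reading is that $g_u$ need only be \emph{monotone and submodular} (not modular), with $\sqrt{\cdot}$ applied to it still giving submodularity via the composition rule for concave non-decreasing functions applied to a monotone submodular argument. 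I expect the cleanest route is: show $g_u$ is a sum of a non-negative modular function (coverage $+$ $|D|$-term), a monotone submodular function ($-(\sum_A x_u)^2$ is submodular because $t \mapsto -t^2$ is concave and $\sum_A x_u$ is modular — though it is decreasing, so one must be careful and instead bound/absorb it), and the bilinear alignment term (which is submodular as a function of $A$ since, writing $p = \sum_{A_I} x_u$, $q = \sum_{A_T} x_u$, the map is $pq$, and $pq$ as a function of the ground set with the product structure is submodular). Then $g_u$ is monotone submodular, hence so is $\Phi \circ g_u = \sqrt{g_u}$, and summing over $u$ with non-negative weights finishes it.

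The main obstacle, and the part I would spend the most care on, is monotonicity — establishing that the negative diversity term and the bookkeeping $|D|\sum_{x\in A} x_u$ term together never make a marginal gain negative. The natural argument: fix $A \subsetneq D$ and $z \notin A$; compute the marginal $g_u(A\cup\{z\}) - g_u(A)$ and show it is $\geq 0$ using the L1-normalization (so each $x_u \in [0,1]$ and $\sum_{v} x_v = 1$) and the fact that $|A| \leq K \ll |D|$. The $|D| z_u$ contribution from the last term should dominate the worst-case loss $-z_u\big(2\sum_{x\in A}x_u + z_u\big)$ from the diversity term because $\sum_{x\in A} x_u \leq |A| \leq K \le |D|$; this is presumably exactly the ``nice mathematical property'' the authors allude to when they introduced the $|D|\sum_{x\in A} x_u$ term. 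So concretely: (1) rewrite $f$ into the deep-submodular template, (2) verify $\Phi = \sqrt{\cdot}$ is concave, non-decreasing, non-negative, (3) decompose $g_u$ and verify each piece is submodular (coverage modular, diversity via concavity of $-t^2$ composed with a modular function, alignment via the product-of-modular structure), (4) verify $g_u(A) \geq 0$ and the marginal-gain-nonnegativity using normalization and $|A|\le K\le|D|$, and (5) invoke the composition and non-negative-combination closure rules to conclude $f$ is monotone submodular. Step (4) is where the real work lies; everything else is bookkeeping.
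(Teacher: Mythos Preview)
Your high-level scaffold is right and matches the paper: reduce to showing that for each fixed $u$ the inner quantity $g_u(A)$ is non-negative, monotone, and submodular, and then invoke the concave-of-monotone-submodular composition rule together with non-negative linear combination. Your monotonicity heuristic (that the $|D|\sum_{x\in A}x_u$ term was inserted precisely so that its contribution dominates the negative diversity marginal, using L1-normalization to get $\sum_{x\in A}x_u\le |A|<|D|$) is also exactly what the paper does.

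The genuine gap is in your step (3), the piecewise submodularity argument. The alignment term $h(A)=\bigl(\sum_{x\in A_I}x_u\bigr)\bigl(\sum_{y\in A_T}y_u\bigr)=pq$ is \emph{super}modular, not submodular: adding an image element has marginal $z_u\cdot q$, and $q$ only grows as $A$ grows. So you cannot conclude submodularity of $g_u$ by summing submodular pieces; the alignment piece points the wrong way. (Your ``product-of-modular structure'' intuition gives supermodularity, the same way $|S\cap X|\cdot|S\cap Y|$ is supermodular for disjoint $X,Y$.) Likewise, the diversity term alone, while submodular, is non-monotone, so it cannot carry the $\sqrt{\cdot}$ composition by itself.

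What rescues the proof --- and what the paper does --- is to treat the terms jointly rather than separately. First, coverage minus diversity plus the $|D|$-term factor neatly:
\[
\sum_{x\in A}\sum_{y\in D}x_uy_u-\sum_{x\in A}\sum_{y\in A}x_uy_u+|D|\sum_{x\in A}x_u
=\Bigl(\sum_{x\in A}x_u\Bigr)\Bigl(|D|+\sum_{y\in D\setminus A}y_u\Bigr).
\]
Then one computes the full marginal $g_u(A\cup\{p\})-g_u(A)$ directly. For $p$ an image (the text case is symmetric) it equals $p_u\bigl(|D|+\sum_{y\in D}y_u+\sum_{y\in A_T}y_u-2\sum_{x\in A}x_u\bigr)$, up to the self-term. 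Monotonicity follows from $|D|>\sum_{x\in A}x_u$ and $\sum_{y\in D}y_u\ge\sum_{x\in A}x_u$. For submodularity one checks that $\sum_{y\in A_T}y_u-2\sum_{x\in A}x_u=-\bigl(\sum_{y\in A_I}y_u+\sum_{x\in A}x_u\bigr)$ is decreasing in $A$, so the marginal shrinks. In other words, the diversity term is ``twice as submodular'' as the alignment term is supermodular (because $A_T\subseteq A$), and this is what makes the combination submodular; your piecewise plan misses this cancellation.
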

\begin{proof}
Let the image-text alignment term of $f(A)$ be
$h(A) = \sum_{x \in A_I}\sum_{y \in A_T}x_uy_u$.
The function $f$ in Equation \ref{eq:dsf} can be simplified to $f(A) = \sum_{u \in [d]} w_u\sqrt{g(A)}$ where,
\begin{equation}
    \label{eq:g(A)}
    g(A) = \Bigl( \sum_{x \in A}x_u \Bigr) \Biggl( |D| + \sum_{y \in D, y \notin A}y_u \Biggr) + h(A)
\end{equation}

In order to show that $f(A)$ is monotone submodular, we can show that $g(A)$ is monotone submodular function.
\[ g(A \cup \{p\}) = \Bigl( \sum_{x \in A}x_u + p_u \Bigr) \Biggl( |D| + \sum_{y \in D, y \notin A}y_u \ \ - \ \ p_u\Biggr) + h(A\cup\{p\}) \]
The proof will remain same for either $p \in A_I$ or $p \in A_T$. So let us consider the case $p \in A_I$.
\[ h(A \cup \{p\}) \ - \ h(A) = p_u\sum_{y \in A_T}y_u\]
\begin{equation}
    \label{eq:marginal g(A)}
    g(A \cup \{p\}) \ \ - \ \ g(A) =  p_u \biggl( |D| +  \sum_{y \in D}y_u  +  \sum_{y \in A_T}y_u -  \ \ 2 \sum_{x \in A}x_u  \biggr)
\end{equation}

For simplicity, lets denote $g(A \cup \{p\}) -  g(A)$ as $g_A(p)$
Since \[|D| > \sum_{x \in A}x_u \ \ and \ \  \sum_{y \in D}y_u  > \sum_{x \in A}x_u  \]
\[ g_A(p) =  p_u \biggl( \bigl(|D| -  \  \sum_{x \in A}x_u\bigr) +  \bigl(\sum_{y \in D}y_u -  \  \sum_{x \in A}x_u\bigr)  +  \sum_{y \in A_T}y_u   \biggr) > 0\]

Hence $g(A)$ is monotone. For submodularity, the property of diminishing marginal gains should hold. Let $A \subset B$. We need to show that $g_A(p) \geq g_B(p)$.
Since the term
$p_u \biggl( |D| +  \sum_{y \in D}y_u ) \biggr)$
in the equation \ref{eq:marginal g(A)} is independent of A, showing $g_A(p) \geq g_B(p)$ dissolves to showing
\begin{equation}
    \label{eq:ineq}
    \sum_{y \in A_T}y_u -  \ \ 2 \sum_{x \in A}x_u \geq \sum_{y \in B_T}y_u -  \ \ 2 \sum_{x \in B}x_u
\end{equation}
since, 
$\sum_{y \in A_T}y_u - 2 \sum_{x \in A}x_u = - \Bigl( \sum_{y \in A_I}y_u + \sum_{x \in A}x_u \Bigr)$.
As A increases the value becomes more negative, hence the inequality \ref{eq:ineq} is always valid. This proves the submodular property of $g(A)$.
Hence g(A) is monotone-submodular. Since we know that for any monotone submodular function $p(A)$,  $q(A)=r(p(A))$ is monotone submodular, if $r$ is any non decreasing concave function \citep{lin-bilmes-2011-class}, We can say that the set function $f(A)$ is a monotone submodular function.
\end{proof}

Please note that the submodular function $f$ in Equation \ref{eq:dsf} proposed by us for multimodal summarization is very different from the existing set of submodular functions used for the purpose of text summarization. Most of those existing functions capture diversity rewards assuming the availability of cluster of text and ignore the multimodal aspect of the problem \citep{lin-bilmes-2011-class,modani2016summarizing}.
To maximize $f(A)$, we use the simple greedy algorithm which is a $(1-\frac{1}{e})$ factor approximation of the optimal solution since $f$ is monotone and submodular \citep{lin-bilmes-2011-class}. At each step of the greedy algorithm, we include a new content element $x \in D \setminus A$ into $A$ for which $f(A \cup \{x\}) - f(A)$ is maximum till $|A| = K$.

\subsubsection{Minimization w.r.t. $w$}
Next, we assume $A$ to be fixed and use a projected stochastic gradient descent approach to minimize the loss function in Equation \ref{eq:loss} w.r.t $w \geq 0$. The subgradient of the loss w.r.t. $w_u$ ($u$th dimension) on the $i$-th sample of the training set is calculated as $\frac{\partial f(A)}{\partial w_u} - \frac{\partial f(A_i^*)}{\partial w_u} + \lambda w_u$, where $A = \max_{\substack{A \subseteq D_i \\ |A| \leq K}} \{f(A)\}$.
This gives us the stochastic gradient descent step with a learning rate $\alpha > 0$ as (check proof of Theorem \ref{thm:submod}):
\begin{align*}
    w_u = w_u - \alpha \Big( \sqrt{\Bigl( \sum_{x \in A}x_u \Bigr) \Biggl( |D| + \sum_{y \in D, y \notin A}y_u \Biggr) + h(A)} \\ - \sqrt{\Bigl( \sum_{x \in A_i^*}x_u \Bigr) \Biggl( |D| + \sum_{y \in D, y \notin A_i^*}y_u \Biggr) + h(A_i^*)} + \lambda w_u \Big)
\end{align*}
To ensure the non-negativity, we project it to the positive quadrant by setting $w_u = \max(0, w_u)$, $\forall u=1,2,\cdots,d$.

We iteratively solve the optimization problem in Equation \ref{eq:loss} by maximizing it w.r.t. $A$ by keeping $w$ fixed, and then minimizing it w.r.t. $w$ by keeping $A$ fixed. We repeat these two steps until the loss converges on the validation set used in the experiments.

To calculate the value of $f(A)$, we use a weighted sum of $d$ terms and each term can be calculated using pre computing and using the previous term. Since we are using the greedy algorithm to find $max_{A \subseteq 
D_i} {f(A)}$ , this total step has time complexity of $O(KNd)$. For the update of the weights step, we do $d$ updates (all weights get updated) and for each update, all the summations can be done independently and then multiplied so we get the time complexity for each update as $O(K)$ ($|D|$ and total sum ($\Sigma_{y \in D} y_u$) values are pre computed hence the only term calculated during run time is $\Sigma_{x \in A} x_u$ term, which is then used to calculate $\Sigma_{y \in D, y \notin A} y_u = \Sigma_{y \in D} y_u - \Sigma_{x \in A} x_u$), this gives us total training time complexity as $O(NKd + Kd)$ for one full training update. For the inference time, we only need to use greedy algorithm to get the best possible subset (summary), hence this is will be of time complexity $O(KNd)$


\subsection{Content Paraphrasing}
Text sentences in the multimodal extractive summary may not be suitable to put in the poster directly. 
We choose ChatGPT (GPT-3.5-turbo) as it is shown to perform very well for text paraphrasing for different use cases \citep{chen2023robust}.
But, applying it directly to the long text of the whole document is not possible due to the length of the document. The approaches that deal with the context length has their limitations as discussed in Section \ref{sec:intro}.
However, the length of the extractive summary is limited to $K$. We choose $K$ in such a way that the whole text from the extracted multimodal summary can be fed to ChatGPT within a single API call. A sample prompt for paraphrasing the text in the poster is:\\
\textit{"Group and rephrase the content of the following text into 5 to 8 topics without altering the order such that for each topic, there is a title and atleast 3 rephrased sentences as bullet points so that it will look good in a poster. Do not add any new content.\\
Text: \{Extractive Summary Text\}"}


We use the output paraphrased text along with the images selected in the multimodal extracted summary as the content to be put in the poster.

\subsection{Template Generation}\label{sec:tempgen}
We define the template to be a combination of different style elements such as font of the text and different colors to be used, and the layout of the poster (position of different content elements). We discuss each of them as follows.

\subsubsection{Font Selection}

Fonts are crucial components of posters as they signal the intent of the content provided and enable mental maps for familiarity.
For example, cooking books are often associated with serif and cursive styles. 
In this regard, we train a model based on the poster title guiding the visual attributes of the font chosen.
The dataset for this task was the \textit{Let me choose} dataset \citep{shirani2020font}, which consists of pairs of titles and the most appropriate fonts for them, from a sample size of $10$ fonts.
%
We use a fine-tuned MiniLM \citep{DBLP:journals/corr/abs-2002-10957} transformer model as the base encoder to the font selection model. The 384-dimensional feature vector is passed through a 2-layer fully connected network with a dropout layer and uses the LeakyReLU activation function. 
To find the appropriate learning rate for this process, we use LR-Finder~\footnote{\url{https://github.com/davidtvs/pytorch-lr-finder}} and trained the model for 20,000 epochs on the \textit{train} section of the dataset.


\begin{figure}
    \centering
    \includegraphics[width=0.27\textwidth]{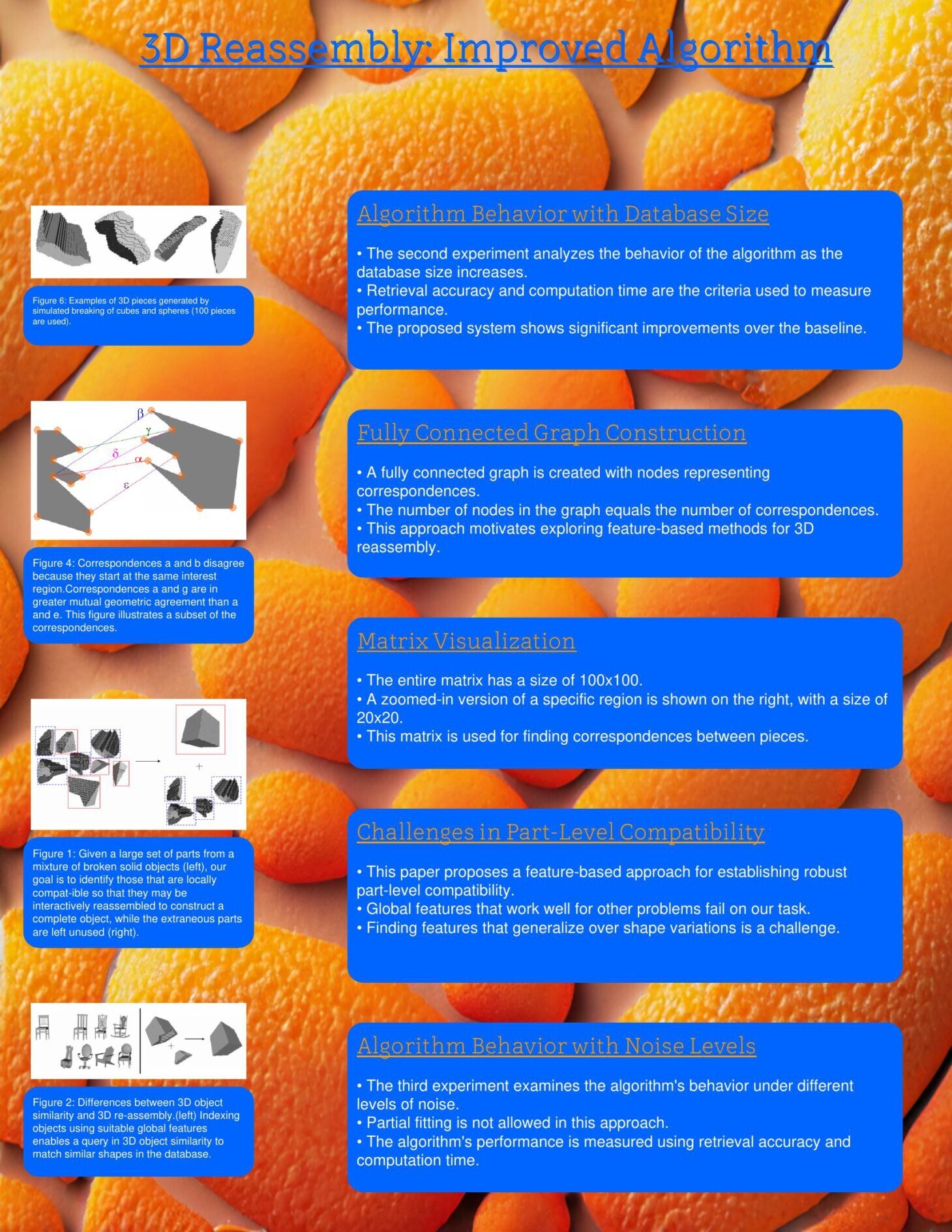} 
    \caption{A sample poster generated by PostDoc for a research paper}
    \label{fig:sample_poster}
\end{figure}

\subsubsection{Color Selection}

Colors are also important for posters as they capture attention and contextualise the content.
Our pipeline has three main colors: (1) Background color of the poster; (2) Box fill color that serves as the color for the bounding boxes that house textual content; and (3) Text fill color - the font color of the texts.
To generate the colors used in the poster, we use a model trained on the TPN architecture as proposed in Text2Colors \citep{text2colors}. This model, called \textit{TPNSmall}, is trained for 7,000 epochs on the \textit{Text2Colors} dataset. For a given poster title, we first pass it through a QA (Question Answering) model with the prompt \textit{"What is the main point here?"}. This gives the \textit{intent} of the poster. TPNSmall outputs a palette of hex codes, given the intent.
We make use of a publicly available fine-tuned Question Answering model~\footnote{\url{https://huggingface.co/deepset/roberta-base-squad2}}. 
The \textit{dominant color} in the palette is chosen to be the background color of the poster, and its \textit{complement} is chosen to be the box fill color. The text text fill color is chosen to be black or white based on its contrast with the box fill color. We then use the background color in a prompt to Firefly~\footnote{\url{https://firefly.adobe.com/}} that generates a background grounded on it.

\subsubsection{Layout Generation}
\label{sec:layout_pipeline}
We propose a heuristic based approach for generating a balanced layout conditioned on the paraphrased content. For each topic with the associated bullet points from the paraphrased content, we create a text box in the poster layout. Similarly, for each image (with the associated caption when available), we create an image box. 
We kept the images on the left and text boxes on the right by dividing the space into two parts vertically. The width of the text boxes is fixed where as that of the images is adjusted based on number of images.  
To estimate the height of the text boxes, we consider the content length.
The detailed calculations are provided in the supplementary material. By following this approach, we achieve a well-organized and visually appealing layout for posters, adapting the design based on the number of text and image boxes required. A sample poster is shown in Figure \ref{fig:sample_poster}.

\begin{table*}[ht]
\centering
\caption{Comparison of various multimodal summarization methods on MSMO dataset ~\citep{zhu-etal-2018-msmo}}
\resizebox{0.9\linewidth}{!}{
\begin{tabular}{c|cccccccc}
    \toprule
      Method & ROUGE-L & ROUGE-1 & ROUGE-2 & Coverage & Diversity & Image Precision & Inference time (sec)\\
    \midrule
    MemSum + BLIP & 0.24 $\pm$ 0.11 & 0.37 $\pm$ 0.12 & 0.15 $\pm$ 0.11 & 0.29 $\pm$ 0.06 & 0.31 $\pm$ 0.11 & 0.73 $\pm$ 0.33 & 3.27  \\    BRIO + BLIP & 0.36 $\pm$ 0.11 & 0.43 $\pm$ 0.11 & 0.18 $\pm$ 0.10 & 0.37 $\pm$ 0.06 & 0.45 $\pm$ 0.20 & 0.75 $\pm$ 0.32 & 1.04  \\
    GPT-3.5 + BLIP & 0.27 $\pm$ 0.08 & 0.34 $\pm$ 0.08 & 0.11 $\pm$ 0.06 & \textbf{0.38 $\pm$ 0.06} & 0.54 $\pm$ 0.19 &\textbf{ 0.75 $\pm$ 0.32} & 14.88  \\ 
    \textbf{PostDoc} & \textbf{0.68 $\pm$ 0.14} & \textbf{0.70 $\pm$ 0.10} & \textbf{0.36 $\pm$ 0.13} & 0.30 $\pm$ 0.03 & \textbf{0.58 $\pm$  0.06} & 0.74 $\pm$ 0.34 & \textbf{0.68}  \\
    \bottomrule
\end{tabular}
}
\label{tab:MSMOmaintable}
\end{table*}

\begin{table*}[ht]
\centering
\caption{Comparison of various multimodal summarization methods on NJU-Fudan dataset~\citep{qiang2019learning}}
\resizebox{0.9\linewidth}{!}{
\begin{tabular}{c|cccccccc}
    \toprule
      Method & ROUGE-L & ROUGE-1 & ROUGE-2 & Coverage & Diversity & Image Precision & Inference time (sec)\\
    \midrule
    MemSum + BLIP & 0.27 $\pm$ 0.03 & 0.27 $\pm$ 0.03 & 0.17 $\pm$ 0.03 & 0.38 $\pm$ 0.05 & 0.64 $\pm$ 0.05 & 0.39 $\pm$ 0.28 & 10.35  \\  
    BRIO + BLIP & 0.07 $\pm$ 0.02 & 0.07 $\pm$ 0.02 & 0.03 $\pm$ 0.01 & 0.27 $\pm$ 0.06 & \textbf{0.66 $\pm$ 0.06} & 0.38 $\pm$ 0.26 & 6.09  \\
    GPT-3.5 + BLIP & 0.13 $\pm$ 0.05 & 0.13 $\pm$ 0.05 & 0.06 $\pm$ 0.04 & 0.31 $\pm$ 0.06 & 0.65 $\pm$ 0.05 & 0.39 $\pm$ 0.28 & 47.37  \\ 
    \textbf{PostDoc} & \textbf{0.50 $\pm$ 0.04}  & \textbf{0.48 $\pm$ 0.03} & \textbf{0.33 $\pm$ 0.03} & \textbf{0.42 $\pm$ 0.03} & 0.61 $\pm$ 0.04 & \textbf{0.53 $\pm$ 0.32} & \textbf{4.25} \\
    \bottomrule
\end{tabular}
}
\label{tab:Fudanmaintable}
\end{table*}

\begin{table*}[ht]
\centering
\caption{Model ablation study of PostDoc on MSMO and NJU-Fudan Datasets}
\resizebox{\linewidth}{!}{
\begin{tabular}{c|cccc|cccc}
    \toprule
       & \multicolumn{4}{c}{MSMO Datset} & \multicolumn{4}{c}{NJU-Fudan Dataset} \\
      Method & ROUGE-L & Coverage & Diversity & Image Precision & ROUGE-L & Coverage & Diversity & Image Precision  \\
    \midrule
    PostDoc w/o dsf & 0.57 $\pm$ 0.13 & 0.23 $\pm$ 0.04 & 0.61 $\pm$ 0.09 & 0.71 $\pm$ 0.44 & 0.43 $\pm$ 0.10 & 0.34 $\pm$ 0.07 & 0.65 $\pm$ 0.07 & 0.37 $\pm$ 0.17 \\ 
    PostDoc w/o coverage & 0.51 $\pm$ 0.08 & 0.24 $\pm$ 0.06 & 0.61 $\pm$ 0.14 & \textbf{0.75 $\pm$ 0.27} & 0.48 $\pm$ 0.05 & \textbf{0.43 $\pm$ 0.04} & 0.57 $\pm$ 0.06 & 0.52 $\pm$ 0.32 \\
    PostDoc w/o diversity & 0.50 $\pm$ 0.13 & 0.24 $\pm$ 0.03 & 0.62 $\pm$ 0.14 & 0.75 $\pm$ 0.27 & 0.48 $\pm$ 0.05 & 0.43 $\pm$ 0.04 & 0.58 $\pm$ 0.06 & 0.53 $\pm$ 0.32 \\
    PostDoc w/o alignment & 0.56 $\pm$ 0.12 & 0.25 $\pm$ 0.06 & \textbf{0.63 $\pm$ 0.13} & 0.75 $\pm$ 0.28 & 0.45 $\pm$ 0.04 & 0.36 $\pm$ 0.03 & \textbf{0.67 $\pm$ 0.03} & 0.34 $\pm$ 0.19 \\
    \textbf{PostDoc} & \textbf{0.68 $\pm$ 0.14} & \textbf{0.30 $\pm$ 0.03} & 0.58 $\pm$  0.05 & 0.74 $\pm$ 0.34 & \textbf{0.50 $\pm$ 0.04} & 0.42 $\pm$ 0.03 & 0.61 $\pm$ 0.04 & \textbf{0.53 $\pm$ 0.32}  \\
    \bottomrule
\end{tabular}
}
\label{tab:ablation}
\end{table*}

\begin{table}[ht]
\centering
\caption{Results of font recommendation on Let Me Choose Dataset and conditional layout generation on the NJU-Fudan Dataset}
\resizebox{\linewidth}{!}{
\begin{tabular}{ccc|cc}
    \toprule
      \multicolumn{3}{c}{Font Recommendation} & \multicolumn{2}{c}{Layout Generation} \\
      Method & Top-1 F1 & Top-3 F1 & Method & NGOMetric \\
    \midrule
    BERT Model & 0.2697 & 0.5191 & LayoutDM & 0.27 \\
    PostDoc & \textbf{0.4301} & \textbf{0.5950} & PostDoc & \textbf{0.46} \\
    \bottomrule
\end{tabular}
}
\label{tab:ablation_layout_font}
\end{table}

    


   

\section{Experimental Analysis}\label{sec:exp}
In this section, we discuss the details about the experimental setup and results obtained from both automated and human evaluation to understand the quality of the posters generated from PostDoc.

\subsection{Datasets Used}\label{sec:dataset}
We use the MSMO Dataset collected by \citet{zhu-etal-2018-msmo} for training and testing our method for multimodal summarization. The MSMO Dataset has 312,581 samples of which only the test set (10,261 samples) has image annotations present as part of the multimodal summary. We require image annotations for training our deep submodular function. So, we use 9000 samples from the test set for our training, 261 samples for validation and the remaining 1000 samples for testing. We report the performance of our multimodal summarization method on these 1000 samples in Table ~\ref{tab:MSMOmaintable}.

In order to test the content generated by our multimodal summarization module to that with the actual posters, we make use of the NJU Fudan Dataset \citep{qiang2019learning}. It contains 85 pairs of scientific papers and their corresponding posters. We extract the text and images from the papers and posters using Adobe Extract. We filter paper-poster pairs so that they each have at least one image after being processed by Extract. After this step, we have a filtered dataset of 76 paper-poster pairs. We do not use any portion of this dataset for training. We use it only for testing our summarization method on out of domain data to analyze the generalization ability in Table \ref{tab:Fudanmaintable}.

For training the font selection model, we make use of the Let Me Choose Dataset ~\citep{shirani2020font} as discussed in Section \ref{sec:tempgen}. It contains 1309 short texts which are mapped to one of 10 fonts. We follow the same split mentioned by the authors for training (70\%),validation (10\%) and testing (20\%). 

\subsection{Baseline Methods}
\label{sec:baseline}
We could not find any replicable source code for existing document-to-poster works \cite{qiang2019learning,xu2021neural,xu2022posterbot} to make an end-to-end comparison. So, for a thorough evaluation, we analyze each module of PostDoc with the corresponding baselines.
\paragraph{Multimodal Summarization}
To the best of our understanding, there is not publicly available replicable implementation for any of the existing multimodal-in and multimodal-out summarization approaches \citep{zhu-etal-2018-msmo,Zhu_Zhou_Zhang_Li_Zong_Li_2020,zhang2021hierarchical,zhang2022unims}.
Additionally, as we are using a subset of the MSMO test set (\S\ref{sec:dataset}) for training, we will not be able to carry over the metrics reported in these works. 
So, we compare our performance with baselines for text summarization and include images in the summary as a post-processing step.
For the text summarization, we use the following.\\
1. \textbf{Extractive Summarization}: We use the publicly available \textbf{MemSum} architecture ~\citep{gu2021memsum} which currently has the SOTA performance on the GovReport dataset ~\citep{huang2021efficient}. \\
2. \textbf{Abstractive Summarization}: We use the publicly available \textbf{BRIO} ~\citep{liu2022brio} architecture, which currently has the SOTA performance on the CNN/Daily Mail dataset ~\citep{nallapati-etal-2016-abstractive} \\
3. \textbf{GPT-3.5-turbo}: In this baseline, we chunk the input text and summarize each chunk with GPT-3.5-turbo. We concatenate the summaries and finally paraphrase it further with another GPT-3.5-turbo call.

With each of the above text summaries, the images are included in the summary by choosing the top $K_I$ images from the input document based on their similarity with the summarized text. Here the similarity is calculated by the cosine of the BLIP embeddings \citep{li2022blip} of text and images. From the training set, we calculate the average ratio of the number of images in the summary and that in the input document. To fix $K_I$ during inference on a document, we multiply the number of images present in the document with that ratio.
\paragraph{Template Generation}
For font selection, we compare our method (\S\ref{sec:layout_pipeline})
with the Let Me Choose BERT Model ~\citep{shirani2020font} on the Let Me Choose Dataset as shown in Table ~\ref{tab:ablation_layout_font}.
For layout generation, we compare our method (\S\ref{sec:layout_pipeline}) with LayoutDM \citep{inoue2023layout} on the NJU Fudan Dataset, in Table~\ref{tab:ablation_layout_font}.




\subsection{Evaluation Metrics}
We evaluate the generated posters on the following aspects.

\subsubsection{Multimodal Summarization}
To evaluate the salience of the text generated by our method, we employ the standard text summarization metric ROUGE-1, ROUGE-2 and ROUGE-L which compares the generated summary with the ground truth.
To evaluate the generated multimodal summary, we use coverage and diversity \citep{kothawade2020deep} along with image precision and image recall.
Coverage is measured between the generated multimodal summary and the multimodal source document.
\[ Coverage(A) = \frac{1}{|D||A|}\sum_{x \in D}\sum_{y \in A}cosine\bigl(x, y\bigr) \]
\[ Diversity(A) = 1 - \frac{1}{|A|^2}\sum_{x, y \in A}cosine\bigl(x, y\bigr) \] 
Here, $D$ contains the BLIP embeddings for all the content elements of the input multimodal document and $A$ contains the BLIP embeddings from the generated summary.
An ideal summary will have high coverage and high diversity.
Following \citet{zhu-etal-2018-msmo},  we use image precision \textit{$I_P$} to evaluate the images selected by our method. 
$I_P=\frac{|I_A \cap I_G|} {|I_A|}$, where $I_A$ and $I_G$ refer to the images in generated summary and the images in the ground truth summary. We also report average inference time as a metric for computational overhead.



\subsubsection{Template Generation}
\textbf{Font Selection}:
Following ~\citet{shirani2020font}, we measure the performance of font selection models using the average weighted F1-Score over top $k$ where $k$ = \text{\{1, 3\}}.\\
\textbf{Layout Generation}:
To evaluate the layouts generated, we use a combination of the following NGOMetrics~\footnote{\url{http://www.mi.sanu.ac.rs/vismath/ngo/index.html}}:
We use a weighted combination of the equilibrium of the bounding boxes, padding in the layout, density of the bounding boxes with respect to the overall layout, and overlap between the bounding boxes. This allows us to score layouts based on their aesthetic properties.
Please refer to the details about the computation of these metrics in the supplementary material.

\subsection{Performance Analysis}
\paragraph{Multimodal Summarization}
Tables \ref{tab:MSMOmaintable} and \ref{tab:Fudanmaintable} show the mean and standard deviation of the performance of PostDoc and the baselines on MSMO and NJU Fudan datasets respectively. By investigating the results, we note the following:
\textbf{(1)}: PostDoc outperforms all of the baselines on the basis of all the ROUGE metrics with significant margins. This shows that the our multimodal summarization module captures relevant information from the source document which aligns with the ground truth.
\textbf{(2)}: Regarding the visual modality metric image precision, our model performs on par or worse than the baselines on MSMO dataset, but significantly outperforms all the baselines on NJU-Fudan dataset.
It is important to note that the number of images selected in the baseline is pre-fixed based on the average number of images present in the document on the training set. However, for PostDoc, we do not differentiate between the text and images in the selection criteria during the inference. This gives a benefit to the baselines on the MSMO dataset for image selection.
\textbf{(3)}: As mentioned in Section \ref{sec:baseline}, we consider extractive summarization (MemSum), abstractive summzarization (BRIO) and GPT-3.5-turbo for the text summarization module of the baselines.
MemSum and BRIO  are trained on 17517 and 200000 samples respectively and GPT-3.5-turbo is trained on large text databases available on the internet. Our model is able to perform competitively with these baselines even though the data it is trained on (9000 samples) is significantly lesser than the training data of these text summarization methods.
\textbf{(4)}: The GPT-3.5-turbo+BLIP baseline performs competitively on coverage and diversity. But the latency and the cost associated with GPT calls is very high. On average, the number of tokens processed by GPT-3.5 for MSMO dataset is 838.19 and the NJU Fudan dataset is 5323.85. PostDoc is much faster compared to all the baselines in terms of average inference time on a document.

\paragraph{Template Generation}
Table \ref{tab:ablation_layout_font} shows the performance of our model and the baseline on the Let Me Choose dataset for font selection. Our model outperforms the baseline on both the Top-1 F1 Score and the Top-3 F1 Score.
Table \ref{tab:ablation_layout_font} also compares our layout generation module with LayoutDM. As LayoutDM wasn’t explicitly trained on posters, we initially generate 250 candidate layouts using LayoutDM and choose the layout which gives the highest equi-weighted NGOMetrics Score. Our layout generation module, which relies on heuristics, achieves a score that is almost twice the score achieved by LayoutDM.

\subsection{Model Ablation Study}
To understand the importance of different components of PostDoc, we perform the following ablation experiments on MSMO and NJU Fudan datasets, and report the results in Table ~\ref{tab:ablation}.

\textit{PostDoc w/o DSF}: Instead of using the deep submodular function proposed in Equation ~\ref{eq:dsf}, we make use of a simple feature-based submodular function (without any trainable parameters) which takes coverage and diversity into account: $f(A)=\lambda\sum_{x \in A}\sum_{y \in D}x_u y_u - \sum_{x \in A}\sum_{y \in A} x_u y_u$. For inference we choose the subset $A$ which gives the maximum value of $f(A)$. The value of \text{$\lambda$} was set as 0.2 for this experiment.

\textit{PostDoc w/o Coverage}: We remove the first term  $\sum_{x \in A}\sum_{y \in D}x_u y_u$ from Equation~\ref{eq:dsf} and proceed with the min-max optimization procedure mentioned in Section ~\ref{sec:opti}.

\textit{PostDoc w/o Diversity}: We remove the second term  $\sum_{x \in A}\sum_{y \in A} x_u y_u$  from Equation~\ref{eq:dsf} and proceed with the min-max optimization procedure mentioned in Section ~\ref{sec:opti}.

\textit{PostDoc  w/o Alignment}: We remove the third term  $\sum_{x \in A_I}\sum_{y \in A_T} x_u y_u$ from Equation~\ref{eq:dsf} and proceed with the min-max optimization procedure mentioned in Section ~\ref{sec:opti}.

From Table \ref{tab:ablation}, we can see that PostDoc is able to achieve the best performance on Rouge metrics which shows the importance of the combination of all the components present in it. On the other three metrics, we do not see any consistent pattern among the different model variants. 

\begin{table}[ht]
\centering
\resizebox{0.8\linewidth}{!}{
\begin{tabular}{c|cc}
    \toprule
       Questions & GPT-3.5 + BLIP & PostDoc \\
    \midrule
    Coverage & 3.13 $\pm$ 0.83 & \textbf{3.40 $\pm$ 0.64} \\
    Duplication & \textbf{4.26 $\pm$ 0.27} & 4.13 $\pm$ 0.37 \\
    Content Ordering & 3.33 $\pm$ 0.62 & 3.33  $\pm$ 0.47 \\
    Image Selection & 2.66 $\pm$ 0.47 & \textbf{3.0 $\pm$ 0.70} \\
    Template & 2.33 $\pm$ 0.5 & \textbf{3.46 $\pm$ 0.18} \\
    Run Time & 1.86 $\pm$ 0.29 & \textbf{3.60 $\pm$ 0.64} \\
    \bottomrule
\end{tabular}
}
\caption{Human evaluation on a subset of NJU-Fudan Dataset}
\label{tab:humanEval}
\end{table}

\subsection{Human Evaluation}
We have conducted a small scale human survey to understand the quality of the generated posters from the actual human perspective. We hired 3 experts in AI as reviewers for this task. We randomly chose 5 research papers from NJU Fudan dataset. We used GPT-3.5+BLIP as the only baseline for this study as GPT-3.5 is often considered to be close to humans in generative tasks. Each reviewer generated the posters by the two algorithms from each of the selected 5 documents and gave a rating on a scale of 1 (worst) to 5 (best) to each poster for each of the following questions:
(1) How good is the poster to \textit{cover} the document?
(2) Is there any \textit{duplication} of content in the poster?
(3) How good is the \textit{ordering of content} in the poster?
(4) How good are the \textit{selected images} in the poster?
(5) How good is the \textit{template} of the poster?
(6) Are you satisfied with the \textit{run time} of the algorithm?

In Table \ref{tab:humanEval}, we compute average and standard deviation of the ratings provided by all the reviewers on all the documents. Interestingly, the human evaluation results correlate well with the automated evaluation on NJU Fudan dataset in Table \ref{tab:Fudanmaintable}. The human evaluation shows that PostDoc is as per or better in terms of output content quality and much better in terms of user satisfaction with the layout and runtime.



\section{Discussion and Conclusion} 
We have presented PostDoc, an end-end pipeline to automatically generate a poster from a long multimodal document. It is interesting to find that PostDoc, by using a novel combination of deep submodular functions and a single call to LLM (ChatGPT) is able to achieve better or comparable performance than direct calls to the same LLM which is expensive both in terms of computation and cost.
In the current work, the performance of PostDoc is limited for non-natural images such as flow-chart and neural diagrams, and other structured elements like tables, etc. We plan to fine-tune VLMs on documents containing such elements as a future work.






\bibliography{PostDoc}

\begin{thebibliography}{41}
\providecommand{\natexlab}[1]{#1}
\providecommand{\url}[1]{\texttt{#1}}
\expandafter\ifx\csname urlstyle\endcsname\relax
  \providecommand{\doi}[1]{doi: #1}\else
  \providecommand{\doi}{doi: \begingroup \urlstyle{rm}\Url}\fi

\bibitem[Bahng et~al.(2018)Bahng, Yoo, Cho, Park, Wu, Ma, and Choo]{text2colors}
H.~Bahng, S.~Yoo, W.~Cho, D.~Park, Z.~Wu, X.~Ma, and J.~Choo.
\newblock \emph{Coloring with Words: Guiding Image Colorization Through Text-Based Palette Generation: 15th European Conference, Munich, Germany, September 8–14, 2018, Proceedings, Part XII}, pages 443--459.
\newblock 09 2018.
\newblock ISBN 978-3-030-01257-1.
\newblock \doi{10.1007/978-3-030-01258-8_27}.

\bibitem[Bhaskar et~al.(2023)Bhaskar, Fabbri, and Durrett]{bhaskar2023prompted}
A.~Bhaskar, A.~Fabbri, and G.~Durrett.
\newblock Prompted opinion summarization with gpt-3.5.
\newblock In \emph{Findings of the Association for Computational Linguistics: ACL 2023}, pages 9282--9300, 2023.

\bibitem[Bilmes and Bai(2017)]{bilmes2017deep}
J.~Bilmes and W.~Bai.
\newblock Deep submodular functions, 2017.

\bibitem[Chai et~al.(2023)Chai, Zhuang, and Yan]{chai2023layoutdm}
S.~Chai, L.~Zhuang, and F.~Yan.
\newblock Layoutdm: Transformer-based diffusion model for layout generation, 2023.

\bibitem[Chen et~al.(2023{\natexlab{a}})Chen, Wong, Chen, and Tian]{chen2023extending}
S.~Chen, S.~Wong, L.~Chen, and Y.~Tian.
\newblock Extending context window of large language models via positional interpolation.
\newblock \emph{arXiv preprint arXiv:2306.15595}, 2023{\natexlab{a}}.

\bibitem[Chen et~al.(2023{\natexlab{b}})Chen, Ye, Zu, Xu, Zheng, Peng, Zhou, Gui, Zhang, and Huang]{chen2023robust}
X.~Chen, J.~Ye, C.~Zu, N.~Xu, R.~Zheng, M.~Peng, J.~Zhou, T.~Gui, Q.~Zhang, and X.~Huang.
\newblock How robust is gpt-3.5 to predecessors? a comprehensive study on language understanding tasks.
\newblock \emph{arXiv preprint arXiv:2303.00293}, 2023{\natexlab{b}}.

\bibitem[Fu et~al.(2022)Fu, Wang, McDuff, and Song]{fu2022doc2ppt}
T.-J. Fu, W.~Y. Wang, D.~McDuff, and Y.~Song.
\newblock Doc2ppt: automatic presentation slides generation from scientific documents.
\newblock In \emph{Proceedings of the AAAI Conference on Artificial Intelligence}, volume~36, pages 634--642, 2022.

\bibitem[Gu et~al.(2021)Gu, Ash, and Hahnloser]{gu2021memsum}
N.~Gu, E.~Ash, and R.~H. Hahnloser.
\newblock Memsum: Extractive summarization of long documents using multi-step episodic markov decision processes.
\newblock \emph{arXiv preprint arXiv:2107.08929}, 2021.

\bibitem[Gupta et~al.(2021)Gupta, Lazarow, Achille, Davis, Mahadevan, and Shrivastava]{Gupta_2021_ICCV}
K.~Gupta, J.~Lazarow, A.~Achille, L.~S. Davis, V.~Mahadevan, and A.~Shrivastava.
\newblock Layouttransformer: Layout generation and completion with self-attention.
\newblock In \emph{Proceedings of the IEEE/CVF International Conference on Computer Vision (ICCV)}, pages 1004--1014, October 2021.

\bibitem[He et~al.(2023)He, Wang, Qiu, Bui, Shrivastava, and Wang]{He_2023_CVPR}
B.~He, J.~Wang, J.~Qiu, T.~Bui, A.~Shrivastava, and Z.~Wang.
\newblock Align and attend: Multimodal summarization with dual contrastive losses.
\newblock In \emph{Proceedings of the IEEE/CVF Conference on Computer Vision and Pattern Recognition (CVPR)}, pages 14867--14878, June 2023.

\bibitem[Hsu et~al.(2023{\natexlab{a}})Hsu, He, Peng, Kong, and Zhang]{hsu2023posterlayout}
H.~Hsu, X.~He, Y.~Peng, H.~Kong, and Q.~Zhang.
\newblock Posterlayout: A new benchmark and approach for content-aware visual-textual presentation layout, 2023{\natexlab{a}}.

\bibitem[Hsu et~al.(2023{\natexlab{b}})Hsu, He, Peng, Kong, and Zhang]{Hsu_2023_CVPR}
H.~Y. Hsu, X.~He, Y.~Peng, H.~Kong, and Q.~Zhang.
\newblock Posterlayout: A new benchmark and approach for content-aware visual-textual presentation layout.
\newblock In \emph{Proceedings of the IEEE/CVF Conference on Computer Vision and Pattern Recognition (CVPR)}, pages 6018--6026, June 2023{\natexlab{b}}.

\bibitem[Huang et~al.(2021)Huang, Cao, Parulian, Ji, and Wang]{huang2021efficient}
L.~Huang, S.~Cao, N.~Parulian, H.~Ji, and L.~Wang.
\newblock Efficient attentions for long document summarization.
\newblock \emph{arXiv preprint arXiv:2104.02112}, 2021.

\bibitem[Inoue et~al.(2023)Inoue, Kikuchi, Simo-Serra, Otani, and Yamaguchi]{inoue2023layout}
N.~Inoue, K.~Kikuchi, E.~Simo-Serra, M.~Otani, and K.~Yamaguchi.
\newblock {LayoutDM: Discrete Diffusion Model for Controllable Layout Generation}.
\newblock In \emph{Proceedings of the IEEE/CVF Conference on Computer Vision and Pattern Recognition (CVPR)}, pages 10167--10176, 2023.

\bibitem[Kothawade et~al.(2020)Kothawade, Girdhar, Lavania, and Iyer]{kothawade2020deep}
S.~Kothawade, J.~Girdhar, C.~Lavania, and R.~Iyer.
\newblock Deep submodular networks for extractive data summarization.
\newblock \emph{arXiv preprint arXiv:2010.08593}, 2020.

\bibitem[Li et~al.(2017)Li, Zhu, Ma, Zhang, and Zong]{li2017multi}
H.~Li, J.~Zhu, C.~Ma, J.~Zhang, and C.~Zong.
\newblock Multi-modal summarization for asynchronous collection of text, image, audio and video.
\newblock In \emph{Proceedings of the 2017 Conference on Empirical Methods in Natural Language Processing}, pages 1092--1102, 2017.

\bibitem[Li et~al.(2022)Li, Li, Xiong, and Hoi]{li2022blip}
J.~Li, D.~Li, C.~Xiong, and S.~Hoi.
\newblock Blip: Bootstrapping language-image pre-training for unified vision-language understanding and generation.
\newblock In \emph{International Conference on Machine Learning}, pages 12888--12900. PMLR, 2022.

\bibitem[Lin and Bilmes(2011)]{lin-bilmes-2011-class}
H.~Lin and J.~Bilmes.
\newblock A class of submodular functions for document summarization.
\newblock In \emph{Proceedings of the 49th Annual Meeting of the Association for Computational Linguistics: Human Language Technologies}, pages 510--520, Portland, Oregon, USA, June 2011. Association for Computational Linguistics.
\newblock URL \url{https://aclanthology.org/P11-1052}.

\bibitem[Liu et~al.(2023)Liu, Lin, Hewitt, Paranjape, Bevilacqua, Petroni, and Liang]{liu2023lost}
N.~F. Liu, K.~Lin, J.~Hewitt, A.~Paranjape, M.~Bevilacqua, F.~Petroni, and P.~Liang.
\newblock Lost in the middle: How language models use long contexts.
\newblock \emph{arXiv preprint arXiv:2307.03172}, 2023.

\bibitem[Liu et~al.(2022)Liu, Liu, Radev, and Neubig]{liu2022brio}
Y.~Liu, P.~Liu, D.~Radev, and G.~Neubig.
\newblock Brio: Bringing order to abstractive summarization.
\newblock In \emph{Proceedings of the 60th Annual Meeting of the Association for Computational Linguistics (Volume 1: Long Papers)}, pages 2890--2903, 2022.

\bibitem[Mademlis et~al.(2016)Mademlis, Tefas, Nikolaidis, and Pitas]{mademlis2016multimodal}
I.~Mademlis, A.~Tefas, N.~Nikolaidis, and I.~Pitas.
\newblock Multimodal stereoscopic movie summarization conforming to narrative characteristics.
\newblock \emph{IEEE Transactions on Image Processing}, 25\penalty0 (12):\penalty0 5828--5840, 2016.

\bibitem[Modani et~al.(2016)Modani, Maneriker, Hiranandani, Sinha, Utpal, Subramanian, and Gupta]{modani2016summarizing}
N.~Modani, P.~Maneriker, G.~Hiranandani, A.~R. Sinha, Utpal, V.~Subramanian, and S.~Gupta.
\newblock Summarizing multimedia content.
\newblock In \emph{Web Information Systems Engineering--WISE 2016: 17th International Conference, Shanghai, China, November 8-10, 2016, Proceedings, Part II 17}, pages 340--348. Springer, 2016.

\bibitem[Nallapati et~al.(2016)Nallapati, Zhou, dos Santos, Gulcehre, and Xiang]{nallapati-etal-2016-abstractive}
R.~Nallapati, B.~Zhou, C.~dos Santos, C.~Gulcehre, and B.~Xiang.
\newblock Abstractive text summarization using sequence-to-sequence {RNN}s and beyond.
\newblock In \emph{Proceedings of the 20th {SIGNLL} Conference on Computational Natural Language Learning}, pages 280--290, Berlin, Germany, Aug. 2016. Association for Computational Linguistics.
\newblock \doi{10.18653/v1/K16-1028}.
\newblock URL \url{https://aclanthology.org/K16-1028}.

\bibitem[OpenAI(2023)]{openai2023gpt4}
OpenAI.
\newblock Gpt-4 technical report, 2023.

\bibitem[Peng et~al.(2023)Peng, Quesnelle, Fan, and Shippole]{peng2023yarn}
B.~Peng, J.~Quesnelle, H.~Fan, and E.~Shippole.
\newblock Yarn: Efficient context window extension of large language models.
\newblock \emph{arXiv preprint arXiv:2309.00071}, 2023.

\bibitem[Petroni et~al.(2020)Petroni, Lewis, Piktus, Rockt{\"a}schel, Wu, Miller, and Riedel]{petroni2020how}
F.~Petroni, P.~Lewis, A.~Piktus, T.~Rockt{\"a}schel, Y.~Wu, A.~H. Miller, and S.~Riedel.
\newblock How context affects language models' factual predictions.
\newblock In \emph{Automated Knowledge Base Construction}, 2020.
\newblock URL \url{https://openreview.net/forum?id=025X0zPfn}.

\bibitem[Qiang et~al.(2019)Qiang, Fu, Yu, Guo, Zhou, and Sigal]{qiang2019learning}
Y.-T. Qiang, Y.-W. Fu, X.~Yu, Y.-W. Guo, Z.-H. Zhou, and L.~Sigal.
\newblock Learning to generate posters of scientific papers by probabilistic graphical models.
\newblock \emph{Journal of Computer Science and Technology}, 34:\penalty0 155--169, 2019.

\bibitem[Radford et~al.(2021)Radford, Kim, Hallacy, Ramesh, Goh, Agarwal, Sastry, Askell, Mishkin, Clark, et~al.]{radford2021learning}
A.~Radford, J.~W. Kim, C.~Hallacy, A.~Ramesh, G.~Goh, S.~Agarwal, G.~Sastry, A.~Askell, P.~Mishkin, J.~Clark, et~al.
\newblock Learning transferable visual models from natural language supervision.
\newblock In \emph{International conference on machine learning}, pages 8748--8763. PMLR, 2021.

\bibitem[Shelledy(2004)]{shelledy2004make}
D.~C. Shelledy.
\newblock How to make an effective poster.
\newblock \emph{Respiratory Care}, 49\penalty0 (10):\penalty0 1213--1216, 2004.

\bibitem[Shirani et~al.(2020)Shirani, Dernoncourt, Echevarria, Asente, Lipka, and Solorio]{shirani2020font}
A.~Shirani, F.~Dernoncourt, J.~Echevarria, P.~Asente, N.~Lipka, and T.~Solorio.
\newblock Let me choose: From verbal context to font selection.
\newblock In \emph{Proceedings of the 58th Annual Meeting of the Association for Computational Linguistics}, 2020.

\bibitem[Sun et~al.(2021)Sun, Hou, Wang, Zhang, and Wang]{sun2021d2s}
E.~Sun, Y.~Hou, D.~Wang, Y.~Zhang, and N.~X. Wang.
\newblock D2s: Document-to-slide generation via query-based text summarization.
\newblock In \emph{Proceedings of the 2021 Conference of the North American Chapter of the Association for Computational Linguistics: Human Language Technologies}, pages 1405--1418, 2021.

\bibitem[Tschiatschek et~al.(2014)Tschiatschek, Iyer, Wei, and Bilmes]{tschiatschek2014learning}
S.~Tschiatschek, R.~K. Iyer, H.~Wei, and J.~A. Bilmes.
\newblock Learning mixtures of submodular functions for image collection summarization.
\newblock \emph{Advances in neural information processing systems}, 27, 2014.

\bibitem[Wang et~al.(2020)Wang, Wei, Dong, Bao, Yang, and Zhou]{DBLP:journals/corr/abs-2002-10957}
W.~Wang, F.~Wei, L.~Dong, H.~Bao, N.~Yang, and M.~Zhou.
\newblock Minilm: Deep self-attention distillation for task-agnostic compression of pre-trained transformers.
\newblock \emph{CoRR}, abs/2002.10957, 2020.
\newblock URL \url{https://arxiv.org/abs/2002.10957}.

\bibitem[Xu and Wan(2021)]{xu2021neural}
S.~Xu and X.~Wan.
\newblock Neural content extraction for poster generation of scientific papers.
\newblock \emph{arXiv preprint arXiv:2112.08550}, 2021.

\bibitem[Xu and Wan(2022)]{xu2022posterbot}
S.~Xu and X.~Wan.
\newblock Posterbot: A system for generating posters of scientific papers with neural models.
\newblock In \emph{Proceedings of the AAAI Conference on Artificial Intelligence}, volume~36, pages 13233--13235, 2022.

\bibitem[Zhang et~al.(2020)Zhang, Zhao, Saleh, and Liu]{zhang2020pegasus}
J.~Zhang, Y.~Zhao, M.~Saleh, and P.~Liu.
\newblock Pegasus: Pre-training with extracted gap-sentences for abstractive summarization.
\newblock In \emph{International Conference on Machine Learning}, pages 11328--11339. PMLR, 2020.

\bibitem[Zhang et~al.(2021)Zhang, Zhang, Pan, and Huang]{zhang2021hierarchical}
L.~Zhang, X.~Zhang, J.~Pan, and F.~Huang.
\newblock Hierarchical cross-modality semantic correlation learning model for multimodal summarization, 2021.

\bibitem[Zhang et~al.(2022)Zhang, Meng, Wang, Jiang, Liu, and Yang]{zhang2022unims}
Z.~Zhang, X.~Meng, Y.~Wang, X.~Jiang, Q.~Liu, and Z.~Yang.
\newblock Unims: A unified framework for multimodal summarization with knowledge distillation, 2022.

\bibitem[Zhong et~al.(2020)Zhong, Liu, Chen, Wang, Qiu, and Huang]{zhong2020extractive}
M.~Zhong, P.~Liu, Y.~Chen, D.~Wang, X.~Qiu, and X.-J. Huang.
\newblock Extractive summarization as text matching.
\newblock In \emph{Proceedings of the 58th Annual Meeting of the Association for Computational Linguistics}, pages 6197--6208, 2020.

\bibitem[Zhu et~al.(2018)Zhu, Li, Liu, Zhou, Zhang, and Zong]{zhu-etal-2018-msmo}
J.~Zhu, H.~Li, T.~Liu, Y.~Zhou, J.~Zhang, and C.~Zong.
\newblock {MSMO}: Multimodal summarization with multimodal output.
\newblock In \emph{Proceedings of the 2018 Conference on Empirical Methods in Natural Language Processing}, pages 4154--4164, Brussels, Belgium, Oct.-Nov. 2018. Association for Computational Linguistics.
\newblock \doi{10.18653/v1/D18-1448}.
\newblock URL \url{https://aclanthology.org/D18-1448}.

\bibitem[Zhu et~al.(2020)Zhu, Zhou, Zhang, Li, Zong, and Li]{Zhu_Zhou_Zhang_Li_Zong_Li_2020}
J.~Zhu, Y.~Zhou, J.~Zhang, H.~Li, C.~Zong, and C.~Li.
\newblock Multimodal summarization with guidance of multimodal reference.
\newblock \emph{Proceedings of the AAAI Conference on Artificial Intelligence}, 34\penalty0 (05):\penalty0 9749--9756, Apr. 2020.
\newblock \doi{10.1609/aaai.v34i05.6525}.
\newblock URL \url{https://ojs.aaai.org/index.php/AAAI/article/view/6525}.

\end{thebibliography}

\appendix{}

\section{Template Generation for PostDoc}
In this section, we provide the details of the color selection and layout generation modules which we had to skip in the main paper for the limited space.


\subsection{Color Selection}
Colors are also important for posters as they capture attention and contextualise the content~\footnote{\url{https://www.snap.com.au/blog/colours-and-advertising-posters}}. Our pipeline has three main colors: 
\begin{itemize}
    \item Background color - that underpins the poster
    \item Box fill color - that serves as the color for the bounding boxes that house textual content
    \item Text fill color - the font color of the texts themselves.
\end{itemize}

To select the color palette for the poster, we used the Text2Colors' TPN model \cite{text2colors} to generate a candidate palette. \\ \\ From this, we used the relative CLAB ratios to find the most contrastive color within the palette and colored the background of the poster with this color. We then colored the text bounding boxes with its complement. \\ \\




The \textit{dominant color} in the palette is chosen to be the background color of the poster, and its \textit{complement} is chosen to be the box fill color. The text text fill color is chosen to be black or white based on its contrast with the box fill color. We then use the background color in a prompt to Firefly~\footnote{\url{https://firefly.adobe.com/}} that generates a background grounded on it. 

For this, we make use of \textit{thecolorapi.com} to get a textual name for the background/dominant color. \\

We now describe the equation that governs the selection of the dominant color. Given two colors $C$ and $C'$, the relative luminance of $C$ w.r.t $C'$ is \\
\[RL(C, C') = \frac{max(L(C), L(C') + 0.05}{min(L(C), L(C') + 0.05}\]
where $L(C)$ is the luminance (perceived brightness) of the RGB color $C$ as defined in the WCAG standard~\footnote{\url{https://www.w3.org/WAI/GL/wiki/Relative_luminance}}  
\\ \\ 
Given a set of colors $\mathbb{C}$, dominant color $C_d$ is calculated as follows: \\

\[C_d = max_{C \in \mathbb{C}} \sum_{\substack{C' \in \mathbb{C} \\ C' \neq C}}{} RL(C, C')\]

\subsection{Layout Generation}
We propose a heuristic based approach for generating a balanced layout conditioned on the paraphrased content. For each topic with the associated bullet points from the paraphrased content, we create a text box in the poster layout. Similarly, for each image (with the associated caption when available), we create an image box. Let us denote $N_T$ and $N_I$ as the required number of text and image boxes in the layout respectively. We begin by fixing the bounding box for the title, leaving the remaining space ($l$) for distributing image and text boxes. We kept the images on the left and text boxes on the right by dividing the space into two parts vertically ($b1$ and $b2$). The width of the text boxes is fixed (say $b2 - \alpha$) where as that of the images is adjusted based on number of images (say $b1 - \beta /N_I$). Here, $\alpha$ and $\beta$ are  to ensure some marginal gaps between the boxes and the page margin. The width of the captions box is same as that of images. To estimate the height of the text boxes, we consider the content length and the $N_T$ and $N_I$ for captions.  For images, $height = width/aspect\_ratio$. The distance between the text boxes ($dh_1$) and between the image boxes ($dh_2$) is calculated depending on the values of $N_T$ and $N_I$, respectively. Refer to the figure \ref{fig:layout} for better understanding.

\begin{figure}
    \centering
    \includegraphics[width = 0.5\textwidth]{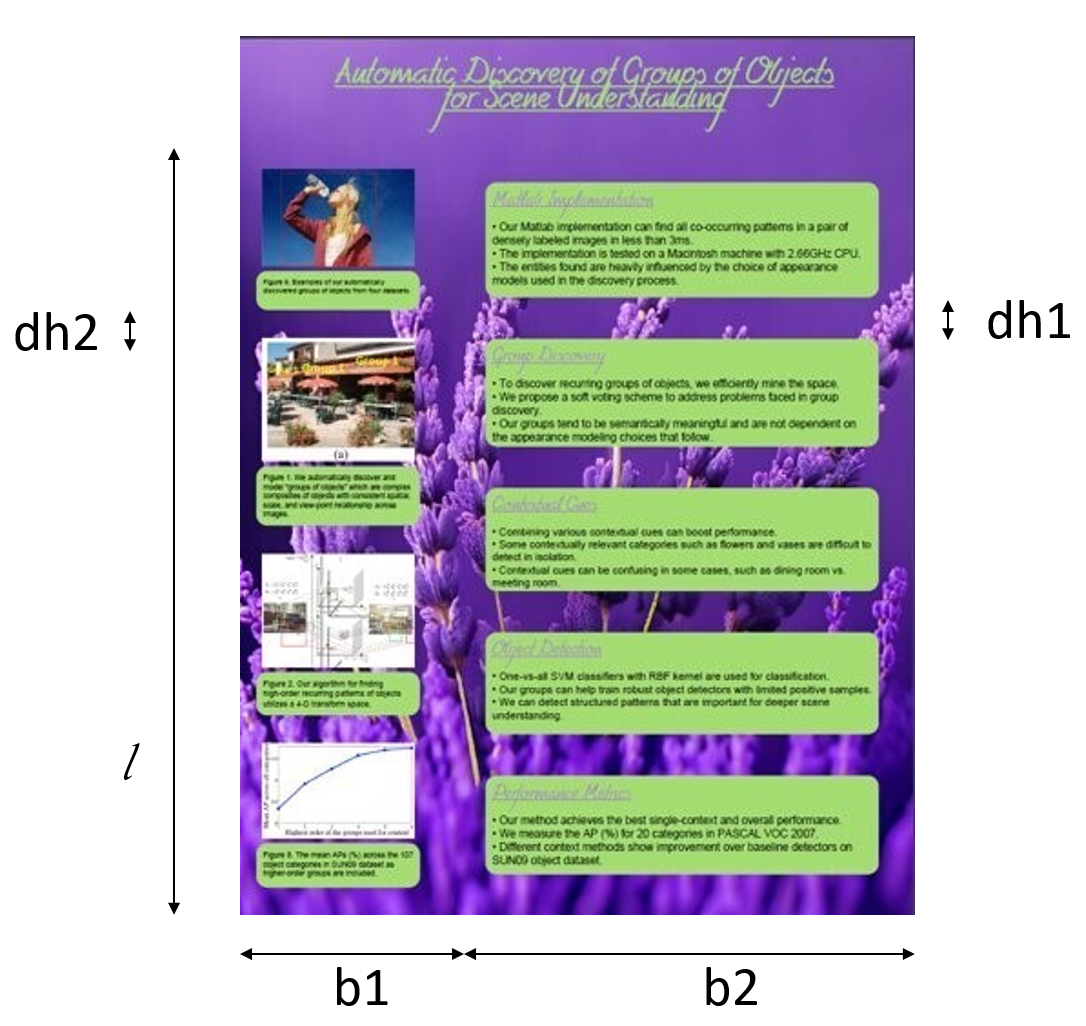} 
    \caption{A sample layout generated by this method ($N_I$ = 4, $N_T$ = 5)}
    \label{fig:layout}
\end{figure}

\begin{table*}[ht]
\centering
\resizebox{\linewidth}{!}{
\begin{tabular}{c|cccccc}
    \toprule
      Method & ROUGE-L & ROUGE-1 & ROUGE-2 & Coverage & Diversity & Image Precision  \\
    \midrule
    PostDoc with submodular function & 0.579509 $\pm$ 0.130993 & 0.465446 $\pm$ 0.051633 & 0.276391 $\pm$ 0.028582 & 0.228677 $\pm$ 0.044838 & 0.617691 $\pm$ 0.093911 & 0.716180 $\pm$ 0.437901 \\
    PostDoc optimized w/o coverage & 0.512333 $\pm$ 0.086352 & 0.526364 $\pm$ 0.053763 & 0.248168 $\pm$ 0.030499 & 0.248681 $\pm$ 0.064308 & 0.619039 $\pm$ 0.146075 & \textbf{0.753445 $\pm$ 0.275050} \\
    PostDoc optimized w/o diversity & 0.507014 $\pm$ 0.130131 & 0.521063 $\pm$ 0.055315 & 0.245248 $\pm$ 0.030504 & 0.248617 $\pm$ 0.248617 & 0.617401 $\pm$ 0.146460 & 0.752585 $\pm$ 0.276562 \\
    PostDoc optimized w/o alignment & 0.562540 $\pm$ 0.129673 & 0.577946 $\pm$ 0.047215 & 0.276391 $\pm$ 0.033015 & 0.252465 $\pm$ 0.064187 & \textbf{0.634624 $\pm$ 0.135463} & 0.748600 $\pm$ 0.282186  \\
    \textbf{PostDoc} & \textbf{0.67703 $\pm$ 0.142195} &\textbf{ 0.703324 $\pm$ 0.100135} & \textbf{0.357304 $\pm$ 0.135787} & \textbf{0.296999 $\pm$ 0.032629} & 0.576494 $\pm$  0.059301 & 0.739570 $\pm$ 0.341569 \\
    \bottomrule
\end{tabular}
}
\caption{Ablation study of PostDoc on MSMO Dataset}
\label{tab:ablationMSMO}
\end{table*}

\begin{table*}[ht]
\centering
\resizebox{\linewidth}{!}{
\begin{tabular}{c|cccccc}
    \toprule
      Method & ROUGE-L & ROUGE-1 & ROUGE-2 & Coverage & Diversity & Image Precision \\
    \midrule
    PostDoc with submodular function & 0.426815 $\pm$ 0.103294 &\textbf{0.514749 $\pm$ 0.044017} & \textbf{0.384185 $\pm$ 0.044100} & 0.343835 $\pm$ 0.065151 & 0.651721 $\pm$ 0.078240 & 0.366457 $\pm$ 0.174725 \\  
    PostDoc optimized w/o coverage & 0.475431 $\pm$ 0.050257 & 0.455375 $\pm$ 0.048136 & 0.325059 $\pm$ 0.047058 & \textbf{0.432243 $\pm$ 0.046794} & 0.577893 $\pm$ 0.065549 & 0.529320 $\pm$ 0.321356 \\
    PostDoc optimized w/o diversity & 0.486038 $\pm$ 0.052330 & 0.465534 $\pm$ 0.050122 & 0.332959 $\pm$ 0.047065 & 0.431614 $\pm$ 0.046335 & 0.580118 $\pm$ 0.065608 & 0.530137 $\pm$ 0.320465 \\
    PostDoc optimized w/o alignment & 0.455443 $\pm$ 0.046483 & 0.503130 $\pm$ 0.051569 & 0.362864 $\pm$ 0.050940 & 0.361852 $\pm$ 0.039532 & \textbf{0.677698 $\pm$ 0.031189} & 0.343208 $\pm$ 0.194419 \\
    \textbf{PostDoc} & \textbf{0.503238 $\pm$ 0.041803} & 0.482009 $\pm$ 0.036599 & 0.331308 $\pm$ 0.034393 & 0.421352 $\pm$ 0.031297 & 0.609323 $\pm$ 0.043384 & \textbf{0.531805 $\pm$ 0.325263} \\
    \bottomrule
\end{tabular}
}
\caption{Ablation study of PostDoc on NJU-Fudan~\cite{qiang2019learning} Dataset}
\label{tab:ablationFudan}
\end{table*}

\begin{table*}[ht]
\centering
\resizebox{0.75\linewidth}{!}{
\begin{tabular}{c|cccc}
    \toprule
      Method & Equilibrium score & Padding score & Density score & Overlap score \\
    \midrule
    
    LayoutDM & 0.450 & 0.282 & 0.092 & 0.266  \\
    PostDoc & 0.600 &  0.213 &  0.256 &  0.774 \\ 

    \bottomrule
\end{tabular}
}
\caption{Comparison study of the conditional layout generation models on the \textit{test} section of NJU-Fudan dataset}
\label{tab:ablation_layout_ngometrics}
\end{table*}

However, when the number of images is less than three, we elongate the bottom text boxes towards the image (left) side and reduce $dh_2$ to ensure visual balance.
$dh_1$ and $dh_2$ are calculated as shown below:
\[ dh_1 = \frac{l - \sum_i h_T^i}{N_T+1} \]
\[ if \ N_I > 3, \ dh_2 = \frac{l - \sum_i {(h_I^i + h_C^i)} - k_1 \cdot N_I}{N_I+1} \]
\[else \ if \ N_I <= 2, \ dh_2 = dh_1 \]
  
Once we get the heights and widths of the bounding boxes, we can use them to calculate the top left vertex of the bounding box. Their Y coordinates of these points are calculated as shown:
\[ Y_T^i = l - (i \cdot dh_1) - \sum_{j=1}^{i-1} h_T^j \]
\[ Y_I^i = l - (i \cdot dh_2) - \sum_{j=1}^{i-1}(h_I^j + h_C^j) -(i-1) \cdot k_2 \]
\[ Y_C^i = Y_I^i -h_I^i - k_2 \]
where $h_T^i$, $h_I^i$, $h_C^i$ are the estimated heights of the $i^{th}$ text, image and caption box respectively. $k_1$ is added to make $dh_2$ depend on the number of images. $k_2$ is the gap between the image and caption boxes. The X coordinates of these points can be linearly interpolated based on the widths of the bounding boxes, $b1$, and $b2$.  Once we get these cordinates , height, and width, we can calculate the other cordinates.
By following this approach, we achieve a well-organized and visually appealing layout for posters, adapting the design based on the number of text and image boxes available.

\section{Additional Results for Model Ablation Study}






Table ~\ref{tab:ablationMSMO} and Table ~\ref{tab:ablationFudan} show the performance of model variants on all the metrics we used in the other experiments. 
\footnote{Please note that, due to a lack of space in the main paper, we did not include the ROUGE-1 and ROUGE-2 score in the model ablation tables of the main paper.}  
Our model, PostDoc, performs best against the ablations on the MSMO test dataset in terms of the ROUGE-1 and ROUGE-2 scores as well.

\section{Details of NGOMetrics}
The metric to evaluate the quality of the generated layouts was built on top of design best practices, as defined by NGO-Metrics~\footnote{\url{http://www.mi.sanu.ac.rs/vismath/ngo/index.html}}. 
%
We use a combination of selected processed NGOMetrics.

We now formally define the NGOMetrics' scoring mechanisms and the notations for the same
\begin{itemize}
    \item The overall frame has a width $W$ and height $H$ \\ $\Rightarrow$ The area of the whole frame is $W \cdot H$
    \item Each rectangular bounding box $i$ has an area $a_i$, with x-coordinates ranging from $x_1^{i}$ to $x_2^{i}$, and y-coordinates ranging from $y_1^{i}$ to $y_2^{i}$, so its center of mass is  $[COM^{i}_{x}, COM^{i}_{y}] $ = $\frac{x_1^{i} + x_2^{i}}{2}, \frac{y_1^{i} + y_2^{i}}{2}$
    \item We condition the layouts on a total of $\mathbb{B}$ rectangular bounding boxes.
\end{itemize}

\vspace{5pt}

\textbf{Equilibrium}
This represents the distance of the center of mass of the bounding boxes (when taken together) and the center of mass of the layout. \\
This metric is calculated as \\ \\ \[equilibrium = 1 - \frac{|EM|_x + |EM|_y}{2}\] where \\ \\
\[EM_x = \frac{2 \cdot \sum_{i=1}^{i=\mathbb{B}} [a_i \cdot (COM^{i}_{x} - W/2)]}{\mathbb{B} \cdot W \cdot \sum_{i=1}^{i=\mathbb{B}}a_i}\] and \\ \\
\[EM_y = \frac{2 \cdot \sum_{i=1}^{i=\mathbb{B}} [a_i \cdot (COM^{i}_{y} - H/2)]}{\mathbb{B} \cdot H \cdot \sum_{i=1}^{i=\mathbb{B}}a_i}\]

\vspace{10pt}

\textbf{Padding}: This represents the area between the boundary of the layout and the topmost and rightmost edges of the bounding boxes. \\  
This metric is calculated as \\ \\ \[padding = 1 - \frac{[R\_max - L\_min] \cdot [T\_max - B\_min]}{W \cdot H}\] where \\ 
\[R\_max = max_{1 \leq i \leq \mathbb{B}} [x_2^{i}]\]
\[L\_min = min_{1 \leq i \leq \mathbb{B}} [x_1^{i}]\]
\[T\_max = max_{1 \leq i \leq \mathbb{B}} [y_2^{i}]\]
\[B\_min = min_{1 \leq i \leq \mathbb{B}} [y_i^{i}]\]
    
\vspace{10pt}

\textbf{Density}: This represents the proportion of the frame layout covered by the bounding boxes. \\
This metric is calculated as \\ \\ \[density = max(1, \frac{\sum_{i=1}^{i=\mathbb{B}} a_i}{W \cdot H \cdot \mathbb{B}})\]    
\vspace{10pt}

\textbf{Overlap}: This represents the sum of areas of intersection between the bounding boxes in the layout. 

This metric is calculated as \\ \\ \[overlap = 1 - max(1, \frac{\sum_{i=1}^{i=\mathbb{B}-1} \sum_{j=i+1}^{j=\mathbb{B}} O_{eff}(i, j))}{W \cdot H})\] where \\
\[O_{eff}(i, j) = \mathbb{1}_{\Delta x(i, j) > 0} \cdot \mathbb{1}_{\Delta y(i, j) > 0} \cdot \Delta x(i, j) \cdot \Delta y(i, j)\]  having \\
\[\Delta x(i, j) = min[x_2^{i}, x_2^{j}] - max[x_1^{i}, x_1^{j}]\] 
\[\Delta y(i, j) = min[y_2^{i}, y_2^{j}] - max[y_1^{i}, y_1^{j}]\]

\vspace{10pt}

\textbf{Overall scoring function} The \textit{overall scoring function} is as follows: 
$w = 0.25 \cdot [equilibrium] + 0.25 \cdot [padding] + 0.25 \cdot [density] + 0.25 \cdot [overlap]$ \\ \\


We now present the average scores of these NGO Metrics for the test section of the NJU-Fudan dataset. The conditions for the LayoutDM model \citep{chai2023layoutdm} was the \textit{(number of topics provided by GPT 3.5 after paraprhasing the textual content + number of images selected)} text boxes + \textit{(number of images selected)} image boxes.


While LayoutDM works with conditional inputs, our observations was that with increased number of text and image boxes, the overlap increases and the layouts look more cluttered, particularly near the center of the document. Our algorithm scales better with an increase in inputs in lieu of density and overlap scores.

Future work includes finetuning LayoutDM on posters and new conventions to order images and text bounding boxes. Furthermore, papers like \cite{Hsu_2023_CVPR} which generate layouts from an image but do not explicitly take number of bounding boxes as inputs, so we are not using it in our work.

\end{document}